\newcommand{\X}{\mathcal{X}}
\newcommand{\Xsafe}{\mathcal{X}_\textrm{safe}}
\newcommand{\U}{\mathcal{U}}
\newcommand{\PDM}{\mathcal{S}_{>0}}
\newcommand{\nx}{n_x}
\newcommand{\dimu}{n_u}
\newcommand{\ufb}{u_\text{fb}}
\newcommand{\trackerr}{\eta}
\newcommand{\T}{\mathcal{T}}
\newtheorem{theorem}{Theorem}
\title{\LARGE \bf Statistical Safety and Robustness Guarantees for Feedback Motion \\ Planning of Unknown Underactuated Stochastic Systems}
\author{Craig Knuth$^{1}$, Glen Chou$^{2}$, Jamie Reese$^{1}$, Joseph Moore$^{1,3}$% <-this % stops a space
\thanks{$^{1}$Johns Hopkins University Applied Physics Laboratory, Laurel, MD, USA.
        {\tt\small Craig.Knuth@jhuapl.edu, Jamie.Reese@jhuapl.edu, Joseph.Moore@jhuapl.edu}}%
\thanks{$^2$University of Michigan, Ann Arbor, MI, USA.
    {\tt\small gchou@umich.edu}}%
\thanks{$^3$ Johns Hopkins University Department of Mechanical Engineering, Baltimore, MD, USA.}
}
\begin{document}
\maketitle

\begin{abstract}
We present a method for providing statistical guarantees on runtime safety and goal reachability for integrated planning and control of a class of systems with unknown nonlinear stochastic underactuated dynamics. Specifically, given a dynamics dataset, our method jointly learns a mean dynamics model, a spatially-varying disturbance bound that captures the effect of noise and model mismatch, and a feedback controller based on contraction theory that stabilizes the learned dynamics. We propose a sampling-based planner that uses the mean dynamics model and simultaneously bounds the closed-loop tracking error via a learned disturbance bound. We employ techniques from Extreme Value Theory (EVT) to estimate, to a specified level of confidence, several constants which characterize the learned components and govern the size of the tracking error bound. This ensures plans are guaranteed to be safely tracked at runtime. We validate that our guarantees translate to empirical safety in simulation on a 10D quadrotor, and in the real world on a physical CrazyFlie quadrotor and Clearpath Jackal robot, whereas baselines that ignore the model error and stochasticity are unsafe.%To ensure success, assumptions are made on the mean model error and noise profile with some experimental validation. The result is an approach that provides paths that are guaranteed to be safe and closely tracked in execution with no assumptions about the true dynamics or disturbance distribution. These statistical guarantees apply over the system's lifetime, and are not limited to a specific planned trajectory. 
\end{abstract}

\section{Introduction}

To deploy robots in the real world, we need assurances on runtime safety and robustness. However, real environments and hardware present a myriad of challenges to attaining such guarantees, such as unknown dynamics, system stochasticity arising from non-repeatable environmental interactions, and underactuation. These challenges establish safe real-world planning and control as a difficult open problem. % which is the subject of this paper. %Often, safety guarantees are made with respect to a model of a system dynamics and, through experimental evaluation, a correspondence is drawn between the system model and the actual system itself. However, this type of model verification is often incomplete and it is unclear what degree of model mismatch would ultimately yield a failure at some point in the system's lifetime. \todo{Glen: I don't really get this, and this lifetime thing is not novel to this paper, so maybe we don't need to emphasize this as much}
To handle unknown system dynamics, a model is often obtained via black-box machine learning, e.g., via neural networks (NNs). Such methods are flexible and can accurately model complex phenomena, like driving on rough terrain or moving through fluid media like water or air. However, model prediction error can vary across the relevant domain.
%this comes at the cost of interpretability and it is often unclear where the learned model is accurate. 
This can be catastrophic for planning and control, which can exploit inaccuracies to produce unreasonable plans that cannot be safely tracked.

To ensure that plans computed using the learned model can be safely tracked on the true system, we must characterize the noise profile of the true system together with the model mismatch, and identify how these errors can impact the performance of downstream planning and control. For unknown stochastic systems, this ultimately requires a statistical argument, as only a finite set of sample rollouts can be obtained on the true system and statements on safety must be made using this data. Therefore, we apply methods from Extreme Value Theory (EVT) \cite{de2007extreme}, a suite of statistical methods for estimating the maxima and minima of unknown distributions from data, to bound the model error in a trusted domain $D$ and propagate its effect through the planner and controller to make statistical guarantees on closed-loop behavior. Our method builds upon prior work that has applied EVT for safe planning with learned models of \textit{deterministic} fully actuated \cite{lipschitz_ral} and underactuated \cite{ChouOB21} systems with \textit{noiseless} training data. These methods use the \textit{Lipschitz constant} (i.e., the maximum rate of change) of the model error to inform planning. However, these approaches are unusable for stochastic systems, where the Lipschitz constant becomes unbounded due to noise (see Fig. \ref{fig:bound}). This corresponds to infinite model error bounds which are useless for informing downstream planning. This limitation precludes \cite{lipschitz_ral, ChouOB21} from scaling to real-world robots where noise and stochasticity are inevitable. 

\begin{figure}
    \centering
    \includegraphics[width=0.8\linewidth]{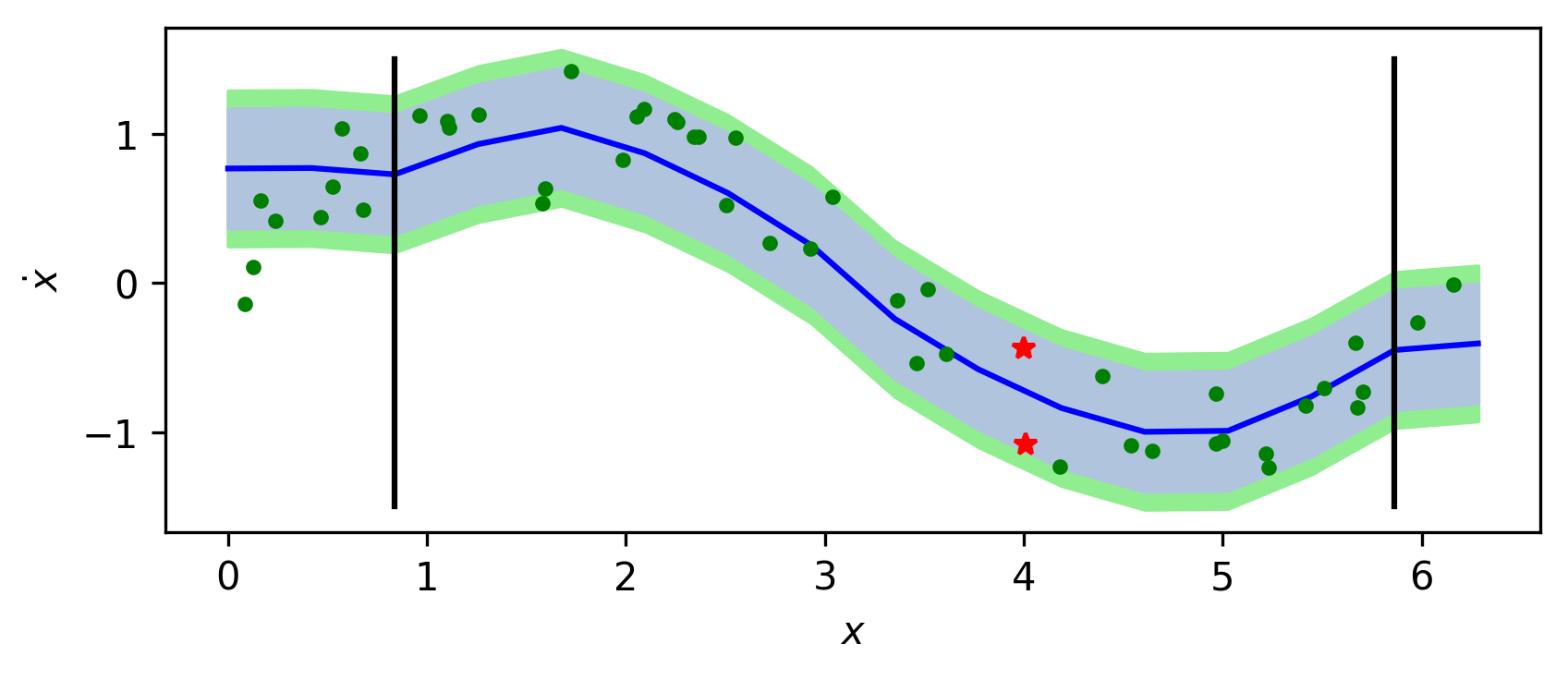}
    \caption{Comparison of our noise bounding approach compared to \cite{lipschitz_ral}, \cite{ChouOB21} for example system $h(x) = \dot{x}$. From data (green dots), our method estimates a bound (shaded blue) around a learned model (blue) in a domain $D$ (between black lines). This bound is verified with the addition of a small buffer (shaded green). Note this bound is not valid outside of $D$. Pairwise slopes on the data are used to estimate the Lipschitz constant in \cite{lipschitz_ral}, \cite{ChouOB21}, but the ratio of change in $\dot{x}$ to change in $x$ is unbounded due to noise, e.g. the red star data points.}
    \label{fig:bound}
    \vspace{-16pt}
\end{figure}

To address this gap, we propose a novel method to bound the error in the learned model which can be applied on stochastic systems and \textit{on real robots}. We bound how this model error can impact a downstream tracking controller based on contraction theory, which applies to a broad class of underactuated systems, %, as well as sytems with image based observations \cite{chou2022safe}
providing statistically-guaranteed closed-loop tracking tubes for any plan in the trusted domain. Our planner is a modified RRT that considers the tracking tube in calculating collision checks, ensures feedback control can be faithfully executed, and ensures that plans stay within the trusted domain where the EVT-based analysis is valid. This ensures runtime safety and goal reachability up to the statistical confidence. This confidence does not degrade with time, as it is made with respect to the model and controller, rather than specific plans. We summarize our method in Fig. \ref{fig:flowchart}. Our contributions:

\begin{itemize}
    \item A novel model error bound that can be applied on stochastic systems with noisy training data
    \item Derivations for propagating this error bound through the planner and controller, and designing a planner that uses the resulting tracking bound to return safer plans
    \item Validation on a simulated 10D quadrotor, a real 6D CrazyFlie quadrotor, and a real 5D Clearpath Jackal, outperforming baselines in safety and robustness
    %, outperforming our previous work \cite{lipschitz_ral}, \cite{ChouOB21} and a naive motion planner
\end{itemize}

\begin{figure*}[ht]
    \centering
    \includegraphics[width=\textwidth]{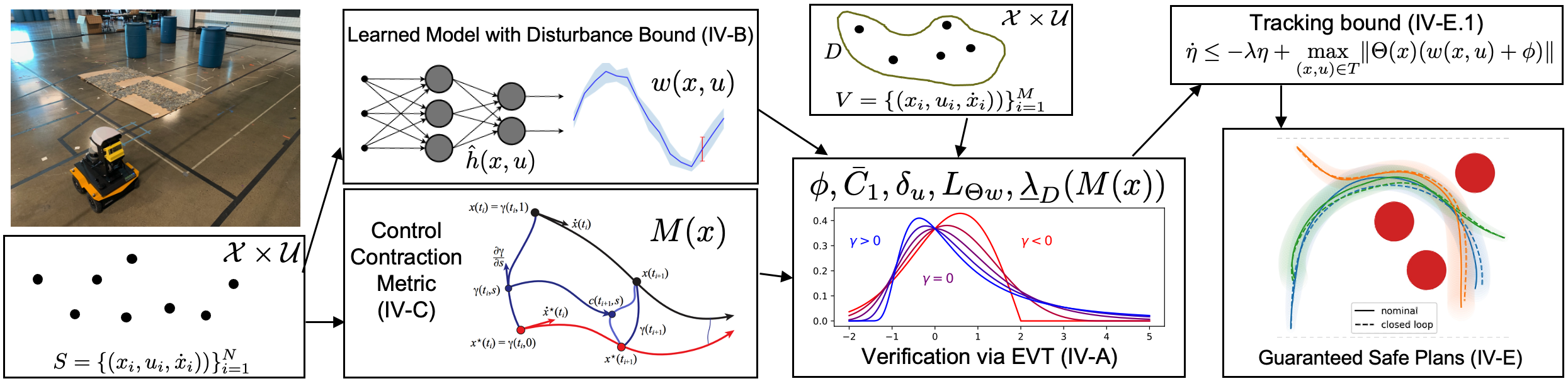}\vspace{-8pt}
    \caption{Flow chart of method. CCM figure sourced from \cite{manchester}. Sections given where applicable.\vspace{-15pt}}
    \label{fig:flowchart}
\end{figure*}

\section{Related Work}

% \textcolor{red}{TODO, include more recent methods that cite our earlier work}

Our work is related to feedback motion planning of uncertain systems. One body of work \cite{hj, MajumdarT17, sumeet_icra} ensures safe feedback motion planning under worst-case error via tools from reachability analysis and contraction theory, under known, constant model error bounds. In contrast, we use a tighter learned spatially-varying error bound, which we also use to inform planning. Adaptive contraction-based controllers are another solution, but existing methods \cite{DBLP:journals/corr/abs-2004-01142, DBLP:journals/corr/abs-2003-10028} assume a known model uncertainty structure, i.e., that it lies in the span of known basis functions. We consider uncertainty arising from the model error and noise distribution, which have no known structure. Stochastic contraction \cite{DBLP:journals/tac/PhamTS09, DBLP:journals/csysl/TsukamotoCS21}, i.e., contraction in expectation, has been studied, but does not easily provide high-probability tracking tubes. Learning-based method to contraction-based control are well-suited for controlling the deep dynamics models we consider; however, existing methods  \cite{dawei}, \cite{DBLP:journals/csysl/TsukamotoC21} assume known dynamics and constant error bounds, or do not consider tracking error \cite{sumeet_wafr}. %Finally, also related is \cite{sumeet_wafr}, which jointly learns a CCM with a dynamics model as a regularizer, but does not consider how the error in the learned dynamics will affect tracking performance, which we address.% Stochastic contraction% However, as \cite{sumeet_wafr} is based on SoS programming and assumes that the unknown dynamics are a linear combination of known basis functions, it cannot be applied to high-dimensional neural network models. It is also only known at certain states (i.e., the training data), making the disturbance bound \textit{a priori} unknown and nontrivial to obtain.

Our work is also related to safe planning for stochastic systems via tools like chance constraints \cite{DBLP:conf/icra/HanJW22, DBLP:journals/ral/ChouWB22, prob_decomp, DBLP:conf/corl/ChouBO20, vitus, ccrrt, estimate_prob_collision_sensing_uncertainty} and occupation measures \cite{DBLP:conf/corl/MengSQWF21, DBLP:journals/ijrr/MajumdarVTT14}. These methods reason about an (approximate) distribution over states, but in doing so, require accurate knowledge of the noise distribution, and have difficulties scaling (in state dimension and plan length \cite{DBLP:conf/icra/HanJW22, DBLP:journals/ral/ChouWB22}). To sidestep these issues, we do not seek out detailed distributional information, instead solving a simpler robust planning problem that uses a high-probability estimate on the \textit{support} of the disturbance.

Finally, our work is related to safe learning-based control. One class of methods learns stability certificates: \cite{DBLP:conf/nips/KolterM19},\cite{DBLP:journals/corr/abs-2008-05952} learns Lyapunov functions from data; however, it is difficult to apply a single Lyapunov function for general point-to-point motion planning. \cite{DBLP:journals/corr/abs-2005-00611} learns a model-free stability certificate, but does not consider control design. Other methods use Gaussian processes (GPs) for reachable tube estimation \cite{koller2018learning} and safe exploration \cite{akametalu2014reachability, berkenkamp2016safe}, but assume that a controller is given; we do not make these assumptions. \cite{DBLP:journals/corr/abs-2002-01587} learns trajectory tracking tubes; however, plans must remain near full training \textit{trajectories} to be accurate, which can heavily restrict planning; we only require plans to be near \textit{state/controls} seen in training. Finally, most relevant are \cite{lipschitz_ral, ChouOB21}, which plan safely with learned dynamics by deriving an invariant tube around a plan inside a ``trusted domain" near the training data. A restrictive key assumption of \cite{lipschitz_ral, ChouOB21} is that the unknown system is \textit{deterministic}, preventing these approaches from scaling to real-world systems with stochasticity and noise. In this paper, we make these methods applicable to a broad class of stochastic systems by making fundamental improvements to \cite{lipschitz_ral, ChouOB21}, i.e., in designing new a model error bound and deriving its impact on tracking error. %This assumption is required to ensure that upon being perturbed by model error, there can always exist some control input which will return the system back onto the plan in one step, i.e. that the system can contract with respect to the Euclidean metric at each timestep. In this sense, this work generalizes this condition, by learning a metric which However, adapting \cite{lipschitz_ral} to 

\section{Preliminaries and Problem Formulation}

Let $\dot{x} = h(x,u)$ be the true continuous stochastic dynamics where $h: \X \times \U \rightarrow \X$, $x \in \X$ is the state, $u \in \U$ is the control, and $\dot{x}$ is the time derivative of $x$, i.e., for each $x, u$, there is a distribution of possible state derivatives. Let $\texttt{dim}(\X) = \nx$ and $\texttt{dim}(\U) = \dimu$. Furthermore, let $\dot{x} = \hat{h}(x,u) = f(x) + g(x) u$ be a deterministic control-affine approximation of the dynamics that is paired with an estimate of the maximum disturbance at $(x,u)$ denoted $w(x,u): \X \times \U \rightarrow \mathbb{R}^+_{\nx}$ where $\mathbb{R}^+_{\nx}$ is the set of positive real vectors with dimension $\nx$. Let $S = \{(x_i,u_i,
\dot{x}_i)\}_{i=0}^N$ and $V = \{(x_i,u_i,\dot{x}_i)\}_{i=0}^M$ be a training and validation dataset of $N$ and $M$ transitions collected on the true system respectively. Let $D = D_x \times D_u \subset \X \times \U$ be the trusted domain partitioned into state and control subsets.

Let $\overline{Q}$ be the symmetric part of $Q$, i.e. $\overline{Q} = \tfrac{1}{2}(Q + Q^\top)$. We denote $\bar{\lambda}(Q)$ and $\underline{\lambda}(Q)$ as the largest and smallest eigenvalues of $Q$ respectively. Furthermore for matrix valued functions $Q(x)$ we define $\bar{\lambda}_D(Q(x)) = \max_{x \in D} \bar\lambda(Q(x))$, similarly for  $\underline{\lambda}_D(Q(x))$. $|v|$ refers to the element-wise absolute value of a vector $v$. $v_k$ denotes the $k$th element of $v$. We define addition between a vector $v$ and scalar $a$ as $a$ added to each element of $v$. Let $\bm{0}_{n \times m}$ denote a zero matrix 
%with dimensions $n \times m$
and $\mathbf{I}_{n \times n}$ denote the identity matrix%
%with dimensions $n \times n$
.

Let $\PDM$ be the set of positive definite matrices with dimension $\nx \times \nx$. For a smooth manifold $\X$, a Riemannian metric is defined as $M(x) : \X \rightarrow \PDM$. 
%The length of a curve $\gamma(s) : [0,1] \rightarrow \X$ can be calculated as the integral of the differential lengths at each point along the curve, i.e. $l(\gamma) = \int_0^1 \sqrt{\dot{\gamma}(\nu)^T M(\gamma(\nu)) \dot{\gamma}(\nu)} d\nu$ where $\dot{\gamma} \doteq \partial \gamma / \partial s$. The distance between two points $x_1$ and $x_2$ is defined as $d(x_1,x_2) = \min_{\gamma,\gamma(0)=x_1,\gamma(1)=x_2} l(\gamma)$. 
The distance between two points $x_1$ and $x_2$, $d(x_1,x_2)$, is defined as the length of the geodesic between them (see \cite{ChouOB21}).
In this paper $d$ refers to Riemannian distance, $\|\cdot\|$ refers to the Euclidean norm, and $\|\cdot\|_{\bar{M}}$ refers to the metric defined under $\bar{M} \in \PDM$. We also note that the Euclidean distance between two points $x_1$, $x_2$ on a Riemannian manifold with metric $M(x)$ can be bounded as follows: $\|x_1 - x_2\| \leq d(x_1,x_2) / \underline{\lambda}_D(M(x))$ (see \cite{ChouOB21}). Let a ball be defined as $\mathcal{B}_{\bar{M}}(\chi,r) \doteq \{\chi' \,|\, \|\chi' - \chi\|_{\bar{M}} \leq r\}$. If $\bar{M}$ is omitted, then the Euclidean norm is used.

Note the specific form of $D$ is left unspecified. We only require the ability to check if a ball of some radius $r_1$ fits in the domain, i.e. $\mathcal{B}_{\bar{M}}((x,u),r_1) \subset D$ (see \cite{lipschitz_ral}). In practice, we define $D$ either as a union of $\ell_2$-balls with radius $r_2 \geq r_1$ about a subset $\bar{S}$ of our dataset $S$, i.e. $\cup_{(x,u,\dot{x}) \in \bar{S}} \, \mathcal{B}_{\bar{M}}((x,u),r_2)$ or a hyper-rectangle in the state-control space, depending on the data collection procedure.

We make the following assumptions. One, the true noise distribution is bounded. Two, in order to construct a valid control contraction metric (CCM, \cite{manchester}), we assume that $\X$ is a smooth manifold and the true dynamics are locally incrementally exponentially stabilizable, i.e. there exists a $\beta$, $\lambda$, and a feedback controller such that $\|x^*(t)-x(t)\| \leq \beta e^{-\lambda t} \|x^*(0) - x(0)\|$ in the trusted domain $D$. 
%Furthermore, we assume the true dynamics do not exhibit a time dependence. If the dynamics do change with time, then the statistical analysis using previous data collected on the system does not apply to future changes in the dynamics.
We do not assume the true system is control affine. Finally, we present the problem we wish to solve:

\noindent \textbf{Problem}: Given datasets $S$ and $V$, learn a dynamics model, design a tracking controller, and construct a trusted domain $D$ for planning. At planning time, given a start $x_I$, goal $x_G$, goal tolerance $\mu$, and safe set $\normalfont\Xsafe$, plan a nominal trajectory $x^*: [0, T] \rightarrow \X$, $u^*: [0, T] \rightarrow \U$ under the learned dynamics $\hat h$ such that $x^*(0) = x_I$, $\dot x^* = \hat h(x^*,u^*)$, $\|x^*(T) - x_G\| \leq \mu$, and $x^*(t)$, $u^*(t)$ remains in $D \cap \normalfont\Xsafe$ for all $ t \in [0, T]$. Also, guarantee that in tracking $(x^*(t), u^*(t))$ under the true dynamics $f$, the system remains in $D \cap \normalfont\Xsafe$ and reaches $\mathcal{B}(x_G, \hat\mu+\mu)$ where $\hat\mu$ is the tracking error bound.

% \textcolor{red}{Our problem is formally to}

\section{Method}

Our approach rests on learning three models: the approximate control affine dynamics $\hat{h}(x,u)$, the estimated disturbance bound $w(x,u)$%
%which captures both the inherent stochasticity of the system and model error
, and the CCM $M(x)$. These models are used in a sampling based planner that computes tracking tubes around nominal plans. Certain properties of these models are verified using EVT which guarantees the tracking tube size and consequently safety in execution.

\subsection{Use of Extreme Value Theory}

The safety guarantees of this work rests on Extreme Value Theory, specifically the Fisher-Tippet-Gnedenko Theorem (FTG) \cite{de2007extreme} which guarantees that sample batch maximums of a random variable $Z$ converge to the generalized extreme value distribution (GEV).
%The FTG theorem is a remarkable result, which states that the distribution of the maximum of random variables converges to one of three distributions; the Frech\'et, Gumbel, or reverse Weibull.
%; intuitively, it is a version of the ``central limit theorem" for extremal values. 
We collect $N_s$ batches of size $N_b$ of i.i.d. samples, $z_i^j \in Z$, and fit the sample batch maximums, $\{\max_{1 \leq i \leq N_b} z_i^j\}_{j=1}^{N_s}$, to GEV
%(a single distribution which compasses the three discussed above) 
with shape, location, and scale parameters. If the fitted shape parameter denoted $\xi$ is less than zero, than the maximum of $Z$ is finite and can be overestimated with a given confidence $\psi$ from the location parameter. In practice, we also perform a Kolmogorov-Smirnov goodness-of-fit test \cite{degroot2013probability} to assure fit quality with $p=0.05$. See \cite{lipschitz_ral} for more details.

\subsection{Robust Model Characterization}

We train $\hat{h}$ using mean squared error loss and $w$ with a hinge loss to encourage the following relationship.
\begin{equation}\label{eq:dist_est}
    |h(x,u) - \hat{h}(x,u)| \leq w(x,u)
\end{equation}

%Recall $h$ is a random variable. 
$\leq$ in the equation above is taken to mean element-wise. Merely training does not guarantee the above relationship is satisfied, so we employ EVT to estimate the extreme value of the following distribution $Z$.
\begin{equation}\label{eq:violation_distribution}
    Z \sim \max_k (|h(x,u) - \hat{h}(x,u)| - w(x,u))_k
\end{equation}

% Verify that I calculated one bound and remove n_x power

Note that if $Z \leq 0$ with probability 1, then $w$ is never an underestimate of the maximum deviation from the learned model for any $(x,u)$ pair. By fitting sample batch maximums from our domain $D$ to GEV, we estimate the maximum of $Z$ up to the confidence $\psi$ which we call $\phi$. Then the following relationship is statistically guaranteed with confidence $\psi$:
\begin{equation}\label{eq:verified_dist}
    |h(x,u) - \hat{h}(x,u)| \leq w(x,u) + \phi
\end{equation}

This statement is much stronger than \eqref{eq:dist_est}, since it is verified for all $(x, u)$ in the domain $D$. We craft a robust planning and control system for the verified dynamics:
\begin{equation}\label{eq:verified_dyn}
    \hat{h}(x,u) = f(x) + g(x) u + \delta, \quad |\delta| \leq w(x,u) + \phi
\end{equation}

%A note on including $w(x,u)$. The same procedure above could be followed without an estimate of the disturbance bound. However, the analysis can only provide a single verified maximum over the entire domain $D$. If there are some regions of the domain where noise is large and some regions where it is small, then the verified maximum will use the worst case estimate everywhere. By including a disturbance estimate $w$, we instead derive a disturbance bound that varies with respect the state and control and therefore is less conservative in regions where noise is small.

\subsection{Controller Formulation}

%We introduce two controller formulations capable of controlling system \eqref{eq:verified_dyn}. If $\dimu \geq \nx$, then a one step feedback law (first introduced in Knuth et al. \cite{lipschitz_ral}) may be applied that immediately corrects for any deviation from the learned dynamics. This approach requires that the learned dynamics are discrete (i.e. $x_{t+dt} = \hat{h}(x,u) = f_{dt}(x) + g_{dt}(x)u$) and estimating the Lipschitz constants of $f_{dt}$ and $g_{dt}$. This approach relies on matrix perturbation theory to assert whether or not a control input exists within bounds in order to drive the learned system exactly to the next state. We apply

Our controller formulation is based on Control Contraction Metrics \cite{DBLP:journals/corr/abs-1912-13138}. The core approach requires a CCM $M(x)$, which is a Riemannian metric that satisfies some additional conditions (to be described shortly), and a corresponding tracking feedback controller $k(x, x^*, u^*) : \X \times \X \times \U \rightarrow \U$. We define the executed state trajectory as $x(t): [0, T] \rightarrow \X$. $k(x, x^*, u^*)$ returns a modified control input that stabilizes the current state $x(t)$ to the nominal state $x^*(t)$. The existence of such a CCM and a controller ensures that the distance between the nominal and executed trajectory decays exponentially, at some contraction rate $\lambda$, i.e., $d(x^*(t), x(t)) \leq C e^{-\lambda (t - t_0)} d(x^*(t_0), x(t_0)), t \geq t_0$, for some constant $C > 0$, when model error and noise are excluded. There are several sufficient conditions for ensuring that $M(x)$ is a CCM (see \cite{ChouOB21}); here we select the following: 
\begin{equation}\label{eq:contraction_conditions}\small
    \hspace{-10pt}\begin{aligned}
    g_\perp(x)^\top \Big( -\partial_f W(x) + 2\overline{\tfrac{\partial f(x)}{\partial x} W(x)} + 2 \lambda& W(x) \Big) g_\perp(x) \\
    &\doteq C_1(x) \preceq 0 \\
    g_\perp(x)^\top \left( \partial_{g^j} W(x) + 2 \overline{\tfrac{\partial g^j(x)}{\partial x} W(x)} \right) g_\perp(x) &= 0,\ j=1\ldots\dimu
    \end{aligned}\hspace{-10pt}
\end{equation}

\noindent where $W(x) = M^{-1}(x)$, $g_\perp(x)$ is a matrix such that $g_\perp^\top(x) g(x) = 0$ for all $x$, $g^j(x)$ is the $j$th column of $g(x)$, $\partial_f W(x)$ is the Lie derivative of $W$ with respect to $f$, and similarly for $\partial_{g^j} W(x)$. Intuitively, the first constraint is a contraction condition that is simplified by the second ``orthogonality" condition; together, they imply the system can be made to contract at rate $\lambda$ along arbitrary dynamically-feasible trajectories. In our results, the second condition is satisfied by thoughtful construction of $M(x)$ and $g(x)$: either $M(x)$ and $g(x)$ are constant, or we require $M(x)$ to be a function of only the first $\nx - \dimu$ states and select $g(x)$ to be of the form $[\bm{0}_{\dimu \times \nx - \dimu} \mathbf{I}_{\dimu \times \dimu}]^\top$ (see \cite{ChouOB21} for details). This construction is always possible without loss of generality by considering an augmented version of the original system where $u$ is treated as part of the state and $\dot{u}$ is treated as a new virtual input.

It is known \cite{manchester} that if $M(x)$ satisfies \eqref{eq:contraction_conditions} over $D_x$, then there always exists a controller $k(x, x^*, u^*)$ which makes trajectories contract at rate $\lambda$ everywhere in $D_x$. To implement $k$, we use an optimization-based controller from \cite[Eqn. (40)]{sumeet_icra}, which returns a minimum-norm modification $\ufb$ to the nominal control $u^*$, such that the executed control $u(t) = \ufb(t) + u^*(t)$ contracts the system at rate $\lambda$. We can also upper bound the magnitude of this controller based on the distance between an executed and nominal trajectory: $\|\ufb(t)\| \leq \|x(t) - x^*(t)\| \delta_u$; see (10) of \cite{ChouOB21} for the exact definition of $\delta_u$. Since \eqref{eq:contraction_conditions} needs to be true everywhere in $D_x$, but we can only evaluate \eqref{eq:contraction_conditions} point-wise, we verify satisfaction with EVT. We estimate the maximum of $\bar{\lambda}_D(C_1(x))$ over $D_x$, denoted $\bar{C}_1$, and verify that it is less than $0$ up to the given confidence $\psi$. We also apply EVT to estimate $\delta_u$ as in \cite{ChouOB21}.

\subsection{Robust Control under Disturbance}

In general, we find a valid CCM on the fixed learned dynamics by parameterizing $W(x)$ and $\lambda(x)$ as neural networks. To satisfy \eqref{eq:contraction_conditions} we only need the minimum value of $\lambda(x)$ but we found in practice that parameterizing $\lambda$ based on the state improved training performance. Our training loss is equal to $L(B) = \sum_{i=1}^3 L_i(B)$ where $B = \{x_1, \ldots, x_{N_b}\}$ is a training batch and $L_1$, $L_2$, and $L_3$ are defined as
\begin{equation}
    L_1(B) = - \alpha_1 \frac{1}{N_b} \sum_{i=1}^{N_b} \exp \left[ \max(\bar{\lambda}(C_1(x_i)), \tau) \right],
\end{equation}\vspace{-8pt}
\begin{equation}
    L_2(B) = \alpha_2 \frac{1}{N_b} \sum_{i=1}^{N_b} \sqrt{\frac{\bar{\lambda}(M(x_i))}{\underline{\lambda}(M(x_i))}},
\end{equation}
\begin{equation}
    L_3(B) = -\alpha_3 \underset{1 \leq i \leq N_b}{\min} \lambda(x_i),
\end{equation}

\noindent where $\max(\cdot,\cdot)$ returns the maximum of the two inputs. $\tau$ is a threshold selected to be $-0.01$ in our results. $\alpha_1$, $\alpha_2$, and $\alpha_3$ are each tunable weights. $L_1$ promotes the satisfaction of \eqref{eq:contraction_conditions} on the training data. $L_2$ promotes the CCM to be well-conditioned, which decreases the tube size for collision-checking. $L_3$ promotes large contraction rates.
 
While the obtained CCM provides a contracting controller for the \textit{learned dynamics}, the closed-loop behavior on the \textit{true system} will differ due to the model mismatch and stochastic disturbances. Fortunately, under bounded perturbations, the tracking error $\trackerr(t) \doteq d(x(t),x^*(t))$ satisfies the following relation (see \cite{chou2022safe}). Here, $\Theta(x)$ is the Cholesky decomposition of $M(x)$, i.e., $M(x) = \Theta(x)^\top \Theta(x)$:
\begin{equation}\label{eq:trackerr}
    \dot{\trackerr}(t) \leq - \lambda \trackerr(t) + \|\Theta(x(t)) (w(x(t),u(t)) + \phi)\|.
\end{equation}

In Sec. \ref{sec:method_planning}, we will leverage this relation to compute disturbance-informed plans that can be more safely tracked.
 
\subsection{Planner Formulation}\label{sec:method_planning}
 
We now discuss our planner, which uses the learned dynamics $\hat{h}$, verified error bound $w(x,u) + \phi$, CCM $M(x)$, and minimum contraction rate $\lambda$ to provide runtime assurances. We use a modified kinodynamic RRT (see Alg. \ref{alg:rrt}) that maintains the accumulated tracking error along transitions. We name this algorithm \underline{S}afety \underline{A}nalysis \underline{f}or L\underline{e}arned Stochastic Underactuated \underline{D}ynamics (SAFED) RRT. In the algorithm block, \texttt{SampleState} samples a random state in $D$, \texttt{NearestNeighbor} returns the nearest neighbor in the tree according to the standard Euclidean norm, \texttt{SampleCandidateControl} samples a random control in $\U$, \texttt{IntegrateLearnedDyn} integrates the learned dynamics returning the state at each $t \in [0,dt]$.

\subsubsection{Propagating Tracking Error}\label{sec:track_bound}

For the following section, we drop dependence on $t$ for brevity. To obtain tubes, we cannot directly use \eqref{eq:trackerr}, as it requires knowledge of the executed trajectory, which is unknown at planning time. We instead seek a tracking error bound that is valid for all possible executed trajectories in the tracking tube $T \doteq \{(x, u) \mid d(x,x^*) \leq \trackerr \,\text{and}\, \|u - u^*\| \leq \|x - x^*\| \delta_u\}$:
\begin{equation}\label{eq:plantrackerr}
    \dot{\trackerr} \leq - \lambda \trackerr + \underset{(x,u) \in T}{\max}\|\Theta(x) (w(x,u) + \phi)\|
\end{equation}
\noindent Computing this maximum is difficult as $\Theta$ and $w$ are complex NNs. For smaller NNs, we can upper bound the maximum via \cite{JordanD20, NIPS2019_9319}, but we opt for a derivation based on the triangle inequality with the nominal point on the path.
\begin{equation}\label{eq:track_dist_bound}\small
\hspace{-10pt}\begin{aligned}
    \max_{(x,u)\in T}&\|\Theta(x) (w(x,u) + \phi)\| \\
    &\hspace{-30pt}\leq \|\Theta(x^*)(w(x^*\hspace{-2pt}, u^*)+ \phi)\| + \|\Theta(x)w(x,u) - \Theta(x^*) w(x^*\hspace{-2pt}, u^*)\| \\
    &\hspace{-17pt}\leq \|\Theta(x^*)(w(x^*, u^*) + \phi)\| + L_{\Theta w} (d(x,x^*) + \|u - u^*\|)  \\
    &\hspace{-17pt}\leq \|\Theta(x^*)(w(x^*, u^*) + \phi)\| + L_{\Theta w} (1 + \tfrac{\delta_u}{\underline{\lambda}_D(M(x))}) d(x,x^*) 
\end{aligned}\hspace{-4pt}
\end{equation}

\noindent Here, $L_{\Theta w}$ is a Lipschitz constant of $\|\Theta(x) w(x,u)\|$ with respect to the Riemannian distance in $x$ and Euclidean distance in $u$:
\begin{equation}\small
    L_{\Theta w} \geq \underset{(x,u) \neq (x',u') \in D}{\sup} \frac{\|\Theta(x)w(x,u) - \Theta(x')w(x',u')\|}{d(x,x') + \|u - u'\|}.
\end{equation}

For systems with swift changes between small and large noise bounds, $L_{\Theta w}$ may be large. However, in our results, this approach provided tight tracking bounds. $L_{\Theta w}$ and $\underline{\lambda}_D(M(x))$ are also estimated via EVT, similar to $\phi$, $\bar{C}_1$, and $\delta_u$. Plugging \eqref{eq:track_dist_bound} into the RHS of \eqref{eq:plantrackerr} and integrating over time yields an upper bound on the tracking error $\bar\eta(t)$ for any plan $x^*: [0, T] \rightarrow \X$ at any time $t \in [0, T]$, i.e., 
\begin{equation}\label{eq:trk_bnd}
    d(x(t), x^*(t)) \le \bar \eta(t) \doteq \bar\eta(0) + \textstyle\int_0^t \textrm{RHS}(\tau) \textrm{d}\tau.
\end{equation} Evaluating \eqref{eq:trk_bnd} requires integrating a scalar ODE, and can be done efficiently in planning (see line 7 of Alg. \ref{alg:rrt}).

\subsubsection{Collision Checking}

%With tracking error bounds, we can upper bound the tracking error about a point on the nominal trajectory. 
We upper bound the nonconstant Riemannian metric $M(x)$ with a constant matrix $\bar{M} \in \PDM$ as in \cite{sumeet_icra}. The collision check for a point $x^*(t)$ on the nominal plan reduces to checking $\mathcal{B}_{\bar{M}}(x^*(t), \eta(t) / \underline{\lambda}_D(M(x))) \in \X_{\text{safe}}$. In theory, this needs to be checked for for all $t$, but in practice we check at discrete samples. See line 8 of Alg. \ref{alg:rrt}.%, where $\X_\text{safe}$ is the safe part of the state space.

\subsubsection{Staying in the Trusted Domain}

We ensure that the nominal plan remains in the trusted domain where the estimates of $\phi$, $\bar{C}_1$, $\delta_u$, $L_{\Theta w}$, and $\underline{\lambda}_D(M(x))$ are statistically guaranteed. This is equivalent to the ball in domain check $\mathcal{B}_{\bar{M}}(x^*(t), \eta(t) / \underline{\lambda}_D(M(x))) \in D_x$. See line 10 of Alg. \ref{alg:rrt}.

\subsubsection{Respecting Controller Bounds}

To guarantee that the nominal plan can be safely tracked, we must ensure that the feedback control can be faithfully executed, i.e. $u(t) \in D_u$. Since we have an upper bound on the feedback term, this corresponds to the ball in domain check $\mathcal{B}(u^*(t), \delta_u \eta(t) / \underline{\lambda}_D(M(x)) \in D_u$. This also ensures our statistical guarantees hold for all feedback control values. See line 9 of Alg. \ref{alg:rrt}.

%\subsubsection{Verified Plans}

\begin{theorem}
Given $\phi$, $\bar{C}_1$, $\delta_u$, $L_{\Theta w}$, and $\underline{\lambda}_D(M(x))$ have been appropriately overestimated (or underestimated for $\underline{\lambda}_D(M(x))$) and a plan $x^*(t), u^*(t)$ found by Alg. \ref{alg:rrt}, the following holds for the executed trajectory: $x(t) \in \Xsafe$, $x(t) \in D_x$, and $u(t) \in D_u$ $\forall t \in [0,T)$.
\end{theorem}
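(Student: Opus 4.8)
The plan is to show that the planner's integrated bound $\bar\eta(t)$ from \eqref{eq:trk_bnd} is a valid upper bound on the true tracking error $\eta(t) \doteq d(x(t),x^*(t))$, and then convert this into the three set-membership claims via the ball-in-set checks enforced on every accepted transition (lines 8--10 of Alg.~\ref{alg:rrt}). As a preliminary step I would invoke the theorem's hypothesis that $\phi$, $\bar C_1$, $\delta_u$, $L_{\Theta w}$ are overestimated and $\underline{\lambda}_D(M(x))$ underestimated: this makes the verified disturbance model \eqref{eq:verified_dyn}, the contraction inequality \eqref{eq:trackerr}, the Lipschitz bound used in \eqref{eq:track_dist_bound}, and the feedback-magnitude bound $\|\ufb\|\le\|x-x^*\|\delta_u$ all simultaneously valid --- but \emph{only} at points $(x,u)\in D$, which is what forces the bootstrap below.

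The heart of the argument is a first-exit-time / comparison-lemma bootstrap that resolves the circular dependence between the bound and the containment. Let $[0,t_e)$ be the maximal interval on which the executed pair $(x(\tau),u(\tau))$ remains in $D$; on this interval \eqref{eq:trackerr} holds. I would then establish $\eta(\tau)\le\bar\eta(\tau)$ on $[0,t_e)$ by comparison. With $\bar\eta(0)\ge\eta(0)$ (at the root of the tree $x(0)=x^*(0)$, so $\eta(0)=0$), suppose $\tau^*$ is the first contact time where $\eta(\tau^*)=\bar\eta(\tau^*)$. On $[0,\tau^*]$ we have $\eta\le\bar\eta$, so $(x,u)$ lies in the tracking tube $T$ of radius $\bar\eta$, whence the maximization bound \eqref{eq:track_dist_bound} dominates the true disturbance term of \eqref{eq:trackerr}; evaluating at $\tau^*$ gives $\dot\eta(\tau^*)\le\mathrm{RHS}(\tau^*)=\dot{\bar\eta}(\tau^*)$, which prevents $\eta$ from crossing above $\bar\eta$. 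Hence $\eta(\tau)\le\bar\eta(\tau)$ throughout $[0,t_e)$.

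With the bound in hand I would close the loop. Using the metric relation from the preliminaries (in its $\bar M$ form, valid since $\bar M\succeq M(x)$) together with $\eta\le\bar\eta$, the executed state satisfies $x(t)\in\mathcal B_{\bar M}(x^*(t),\bar\eta(t)/\underline{\lambda}_D(M(x)))$, and the feedback bound gives $\|u(t)-u^*(t)\|=\|\ufb(t)\|\le\delta_u\bar\eta(t)/\underline{\lambda}_D(M(x))$, so $u(t)\in\mathcal B(u^*(t),\delta_u\bar\eta(t)/\underline{\lambda}_D(M(x)))$. The domain check (line 10) certifies the former ball $\subset D_x$ and the controller check (line 9) certifies the latter $\subset D_u$; therefore $(x(t),u(t))$ stays strictly inside $D$ and cannot reach $\partial D$, forcing $t_e=T$. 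Finally the collision check (line 8) certifies the state ball $\subset\Xsafe$. Combining, $x(t)\in\Xsafe$, $x(t)\in D_x$, and $u(t)\in D_u$ for all $t\in[0,T)$.

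I expect the main obstacle to be exactly the circularity above: $\bar\eta$ is a valid bound only while the trajectory remains both in the tube $T$ (so \eqref{eq:track_dist_bound} applies) and in $D$ (so \eqref{eq:trackerr} and the EVT constants hold), yet remaining in $D$ is precisely the conclusion sought. The exit-time argument handles this, but it requires care at the contact time $\tau^*$ and a monotonicity observation --- since $\bar\eta\ge\eta$, the tube built from $\bar\eta$ only enlarges the set over which the maximum in \eqref{eq:plantrackerr} is taken, keeping the comparison inequality in the conservative direction. A secondary point worth flagging is that the checks are evaluated only at discrete samples in practice, so the continuous-time statement is exact only under the standard assumption of a sufficiently fine discretization.
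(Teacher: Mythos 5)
Your proposal is correct, and its skeleton is the same as the paper's: validity of the tracking-error bound \eqref{eq:trk_bnd}, followed by the ball-in-set checks of lines 8--10 of Alg.~\ref{alg:rrt} to obtain the three containments. The difference is that the paper's proof is a two-sentence sketch --- it asserts that the tracking error is ``appropriately bounded following the argument in Section~\ref{sec:track_bound}'' and that the planner verifies the tube stays within $\Xsafe$ and $D$, and stops --- whereas you supply the one genuinely nontrivial step that sketch leaves implicit: the circular dependence between the bound and the containment. Indeed \eqref{eq:trackerr}, the Lipschitz step in \eqref{eq:track_dist_bound}, the feedback bound $\|\ufb\|\le\|x-x^*\|\delta_u$, and the EVT-certified constants are valid only while $(x(t),u(t))\in D$, yet remaining in $D$ is precisely the conclusion sought. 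Your first-exit-time comparison bootstrap --- restrict to the maximal containment interval $[0,t_e)$, prove $\trackerr\le\bar\trackerr$ there via first contact together with the observation that enlarging the tube only enlarges the maximum in \eqref{eq:plantrackerr} (so the comparison stays conservative), then use the certified balls to force $t_e=T$ --- is exactly the missing lemma the paper's argument relies on but never writes down. Two residual technicalities remain, both of which you flag and neither of which the paper addresses: the continuation past $t_e$ requires the certified balls to sit inside $D$ in a way that permits extension (interior containment, or an open/closed argument on the set of containment times), and the planner enforces its checks only at sampled times, so the continuous-time claim needs a fineness assumption on the discretization. In short: same route as the paper, but your version is the one that actually constitutes a proof.
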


\begin{proof}
Since the tracking error is appropriately bounded following the argument in Section \ref{sec:track_bound}, and the planner verifies that the tracking tube stays with $\Xsafe$ and $D$%
%while satisfying control constraints under feedback
, the resulting trajectory is guaranteed to be safe and feasible.
\end{proof}

As the estimated constants are valid with confidence $\psi^5$ (using independent samples for each estimation), the plan is also guaranteed safe and feasible with confidence $\psi^5$. %Furthermore, this guarantee does not degrade as more plans are executed as the verification is made with respect to characteristics of the model and CCM themselves, not individual trajectories.

\vspace{-8pt}
\begin{algorithm}\small
\KwIn{$x_I$, $x_G$, $\eta_I$, $\lambda$, $dt$, $\mu$}
\SetKwFunction{SampleState}{SampleState}
\SetKwFunction{SampleCandidateControl}{SampleCandidateControl}
\SetKwFunction{NearestNeighbor}{NearestNeighbor}
\SetKwFunction{IntegrateLearnedDyn}{IntegrateLearnedDyn}
\SetKwFunction{InDX}{InDX}
\SetKwFunction{InDU}{InDU}
\SetKwFunction{TrkErrBnd}{TrkErrBnd}
\SetKwFunction{Model}{Model}
\SetKwFunction{ConstructPath}{ConstructPath}
\SetKwFunction{InCollision}{InCollision}

$\T \leftarrow \{(x_I, \eta_I)\}$, $\mathcal{P} \leftarrow \{\emptyset\}$ \\

\While{\upshape True}{
    $x_d \leftarrow$ \SampleState()\\
	$x_{\textrm{n}}, \bar\trackerr_\textrm{n} \leftarrow$ \NearestNeighbor{$\T$, $x_d$} \\ % sample node in tree
	$u_\textrm{c} \leftarrow $ \SampleCandidateControl()\\% sample a control and dwell time
	$x_\textrm{c}^*(t) \leftarrow$ \IntegrateLearnedDyn($x_{\textrm{n}}$, $u_\textrm{c}$, $dt$)\\% integrate the trajectory
	$\bar\trackerr(t) \leftarrow$ \TrkErrBnd($\bar\trackerr_\textrm{n}$, $x_\textrm{c}^*(t)$, $u_\textrm{c}$)\\ %,$\delta_u$, $\underline{\lambda}_D(M(x))$, $L_{\Th}$)\\% tracking error bound
	\lIf{
	    $\neg$\InCollision($x_\textrm{c}^*(t)$, $u_\textrm{c}$, $\bar\trackerr(t)$) $\forall t \in [0,dt]$ $\wedge$  \\
	    $\quad$\InDU($x_\textrm{c}^*(t)$, $u_\textrm{c}$, $\bar\trackerr(t)$) $\forall t \in [0,dt]$ $\wedge$ \\
	    $\quad$\InDX($x_\textrm{c}^*(t)$, $u_\textrm{c}$, $\bar\trackerr(t)$) $\forall t \in [0,dt]$ $\wedge$ \\
	    $\quad\|x^*_c(dt) - x_d\| < \|x_n - x_d\|$}{
	        $\T \leftarrow \T \cup (x^*(dt), \bar\trackerr(dt))$, $\mathcal{P} \leftarrow \mathcal{P} \cup \{u_c\}$
	    }
	%\lElse{continue}
	\lIf{$\exists t, x_c^*(t) \in \mathcal{B}(x_G,\mu)$}{return plan}
}
\caption{SAFED RRT}\label{alg:rrt}
\end{algorithm}\vspace{-16pt}

\section{Results}
We present results on three systems: 1) a 10D simulated quadrotor, 2) a physical CrazyFlie quadrotor, and 3) a physical Clearpath Jackal. Our full method satisfies tracking bounds in all runs. We present the results of our method and comparisons to a na\"ive planner that does not consider the trusted domain nor tracking tubes. We evaluate the baseline \cite{ChouOB21} by estimating a Lipschitz constant of the model error. In each case, the shape parameter $\xi$ of the GEV distribution is positive, corresponding to an infinite Lipschitz constant, and thus, infinite tracking tubes and infeasible plans.

\subsection{10D simulated quadrotor}

We select the underactuated 10D quadrotor model from \cite{sumeet_icra} to evaluate our approach in moderately-high dimensions. Here, we assume knowledge of the nominal dynamics and fit a CCM using it (here, $\lambda = 1.29$); the true dynamics are perturbed with additive noise, sampled uniformly within state-dependent bounds: $\pm [0.01, 0.01,\allowbreak 0.01, |0.05s_1|, |0.05c_2|, |0.05s_3|, |0.07s_7|, |0.07c_8|, |0.07s_9| ]^\top$ (where $s_i, c_i$ are $\sin(x_i), \cos(x_i)$), if within $[-5, 5]$. $w$ is represented as a neural network with 3 hidden layers, each with 256 neurons. Using EVT on a validation dataset of size 100000, we establish a constant bound of $\phi = 0.0075$, with confidence $\psi = 0.99$. We select $D_x = \bigcup_{i=1}^N B(x_i, r_x)$, where $r_x = 0.3$, and $D_u = \U$ from \cite{sumeet_icra}.

% alpha = 0.05;
% beta = 0.07;
% bounds = @(x) [0.01; 0.01; 0.01; ...
%               abs(alpha*sin(x(1))); ...
%               abs(alpha*cos(x(2))); ...
%               abs(alpha*sin(x(3))); ...
%               abs(beta*sin(x(7))); ...
%               abs(beta*cos(x(8))); ...
%               abs(beta*sin(x(9))); ...
%               0.01];

Using our method, we compute 20 plans in an obstacle-filled environment (cf. Fig. \ref{fig:quad10d}) and execute each five times. See Table \ref{table:stats_crazyflie} for numerical results and Figs. \ref{fig:quad10d} for visualizations. The na\"ive planner leads to substantially larger tracking errors due to the larger model error. Comparing against \cite{ChouOB21}, the resulting fitted GEV distribution has shape $\xi=0.6154>0$ which corresponds to an infinite Lipschitz constant.

\begin{table}\centering
\begin{tabular}{ c | c | c}
  & SAFED & Na\"ive planner \\\hline
 \cellcolor{lightgray!50!} \hspace{-7pt}Avg. trk. err.\hspace{-5pt} & \cellcolor{lightgray!50!} 0.021 $\pm$ 0.009 (0.052)& \cellcolor{lightgray!50!} 5.47e3 $\pm$ 2.88e4 (2.48e5) \\ 
 \hspace{-7pt}Goal error\hspace{-5pt} & 0.038 $\pm$ 0.025 (0.117)& 7.24e3 $\pm$ 3.52e4 (3.95e5) \\ 
\end{tabular}
\caption{10D quadrotor tracking errors $\Vert x^*(t) - x(t)\Vert_2$. Mean $\pm$ standard deviation (worst case).}
\label{table:stats_quad10d}
\end{table} 
\begin{figure}
    \centering
    \includegraphics[width=0.9\linewidth]{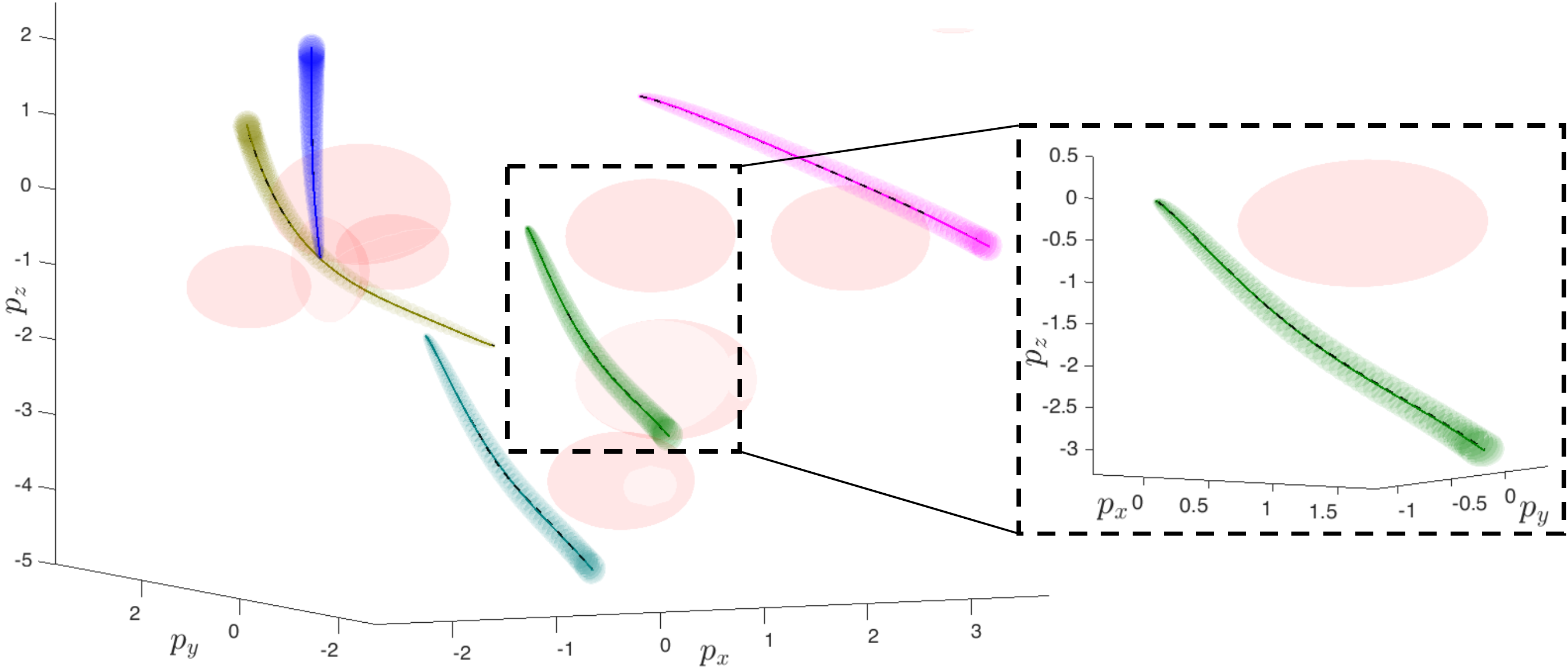}
    \caption{Five plans computed between random start/goals using SAFED; associated tracking tubes are color-coded to the plan. Five executions for each plan are overlaid in black (with similar results); obstacles are in red; zoomed-in view on right.\vspace{-18pt} }%\vspace{-12pt}
    \label{fig:quad10d}%\vspace{-7pt}
\end{figure}

\subsection{Physical CrazyFlie quadrotor}

We consider feedback motion planning for a learned model of a physical CrazyFlie quadrotor, moving in 3D. We select the state as the linear positions and velocities $x = [p_x, p_y, p_z, \dot p_x, \dot p_y, \dot p_z]$, and represent the learned dynamics with a linear model, $\dot x = Ax + Bu$, where $A$ and $B$ were fit to 5700 datapoints using linear least squares. 
%We use this dataset to train a dimension-wise model error bound, i.e., $w: \X \times \U \rightarrow \mathbb{R}^{n_x}$ such that $\vert h_i(x, u) - \hat h_i(x,u)\vert \le w_i(x, u)$, to capture the effect of noise and system nonlinearities (which are minor in our trusted domain that consists of low-velocity states). 
Here, $w$ is represented as a neural network with five hidden layers, each with 1024 neurons. Using EVT on a validation dataset of size 10800, we establish a constant bound of $\phi = 0.0178$, with confidence $\rho = 0.99$; to obtain approximately i.i.d. samples in $D$ required by EVT, we subsample the trajectory rollouts (justified by \cite{sattar2020non}). We select $D_x = \bigcup_{i=1}^N B(x_i,r_x)$ and $D_u = \bigcup_{i=1}^N B(u_i,r_u)$, and $r_x = 1.5$ and $r_u = 0.1$; these values are selected as the $90$th percentile of the minimum distance of each validation datapoint to the training data. Given the learned linear dynamics, the CCM conditions simplify to a single semidefinite constraint (i.e., the condition \eqref{eq:contraction_conditions} holds uniformly for all $x$); hence, the CCM is valid everywhere for the learned dynamics; we set $\lambda = 2.0$. 

Using our method, we compute 20 plans in an obstacle-filled environment (cf. Fig. \ref{fig:crazyflie}) and execute each three times. See Table \ref{table:stats_crazyflie} for numerical results and Figs. \ref{fig:crazyflie} and \ref{fig:crazyflie_real} for visualizations. 
%The na\"ive planner here computes the solution minimum path-length trajectory optimization problem rather than using the RRT. 
In executing these plans, we terminate automatically if the drone deviates from the planned trajectory by more than $0.2$ in any position dimension for platform safety. The na\"ive plans lead to premature terminations in each run (cf. Table \ref{table:stats_crazyflie}. Comparing against \cite{ChouOB21}, the resulting fitted GEV distribution has shape $\xi=0.3272>0$ which corresponds to an infinite Lipschitz constant.

\begin{table}\centering
\begin{tabular}{ c | c | c }
  & SAFED & Na\"ive planner \\\hline
 \cellcolor{lightgray!50!} \hspace{-7pt}Avg. trk. err.\hspace{-5pt} & \cellcolor{lightgray!50!} 0.066 $\pm$ 0.012 (0.121)& \cellcolor{lightgray!50!} 0.198 $\pm$ 0.044 (0.294) \\ 
 \hspace{-7pt}Goal error\hspace{-5pt} & 0.107 $\pm$ 0.042 (0.203)& -- \\ 
\end{tabular}
\caption{Crazyflie tracking errors $\Vert x^*(t) - x(t)\Vert_2$. Mean $\pm$ standard deviation (worst case).}
\label{table:stats_crazyflie}
\end{table} 

\begin{figure}
    \centering
    \begin{subfigure}[b]{0.49\linewidth}
        \includegraphics[width=\linewidth]{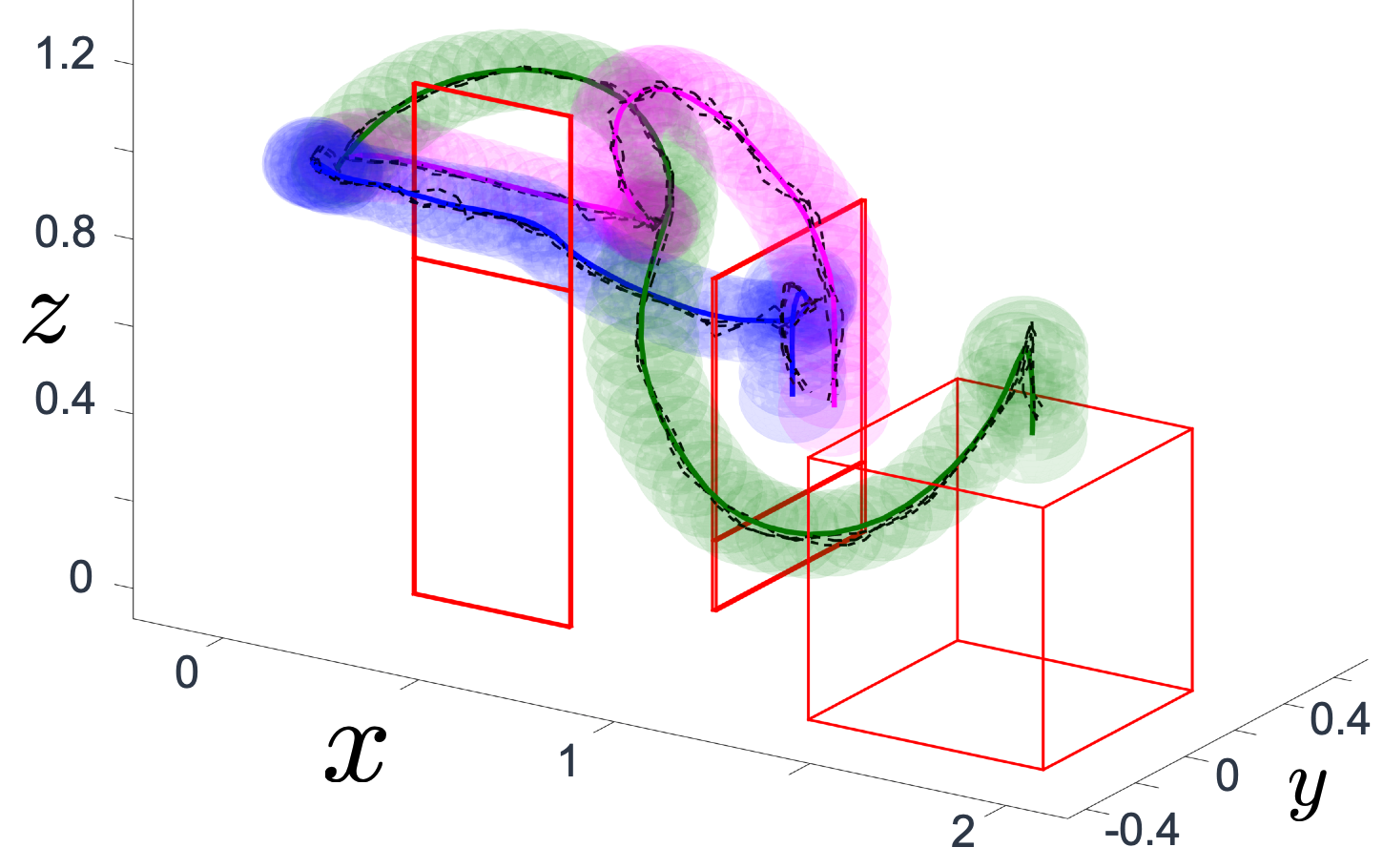}
        \caption{}\label{fig:crazyflie}
    \end{subfigure}
    \begin{subfigure}[b]{0.49\linewidth}
        \includegraphics[width=\linewidth]{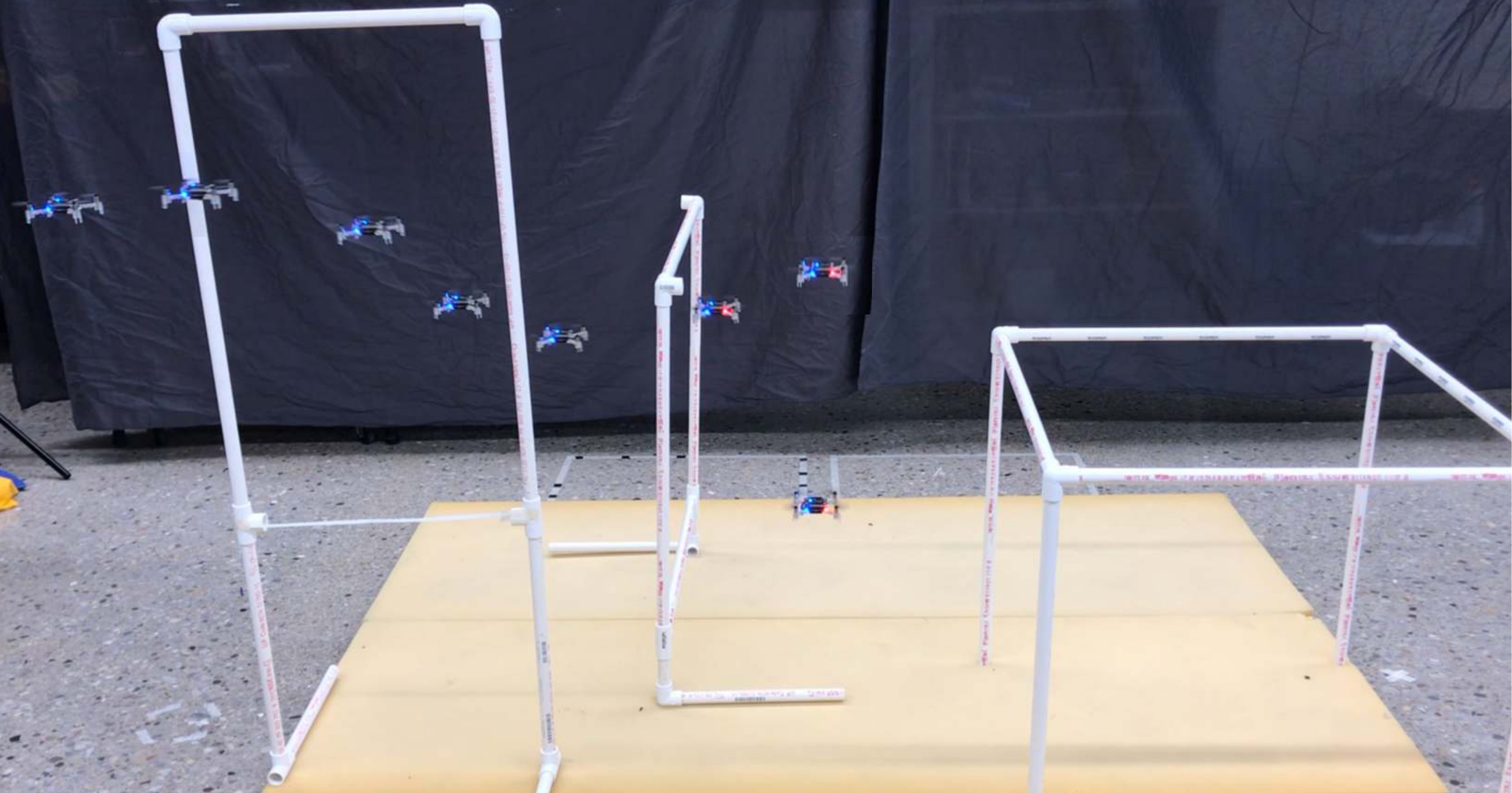}
        \caption{}\label{fig:crazyflie_real}
    \end{subfigure}
    \caption{(a) Three plans computed using SAFED; the corresponding tracking tubes are color-coded to the plan. The three executions for each plan are overlaid in black; the obstacles are in red. (b) One execution of the blue plan.}\vspace{-12pt}
    % \label{fig:crazyflie}%\vspace{-7pt}
\end{figure}

% \begin{figure}
%     \centering
%     \includegraphics[width=1\linewidth]{images/crazyflie_timelapse_comp.pdf}
%     \caption{One execution of the blue plan. }%\vspace{-12pt}
%     \label{fig:crazyflie_real}%\vspace{-7pt}
% \end{figure}

\subsection{Clearpath Jackal}

\begin{figure}
    \centering\vspace{-8pt}
    \includegraphics[width=\linewidth]{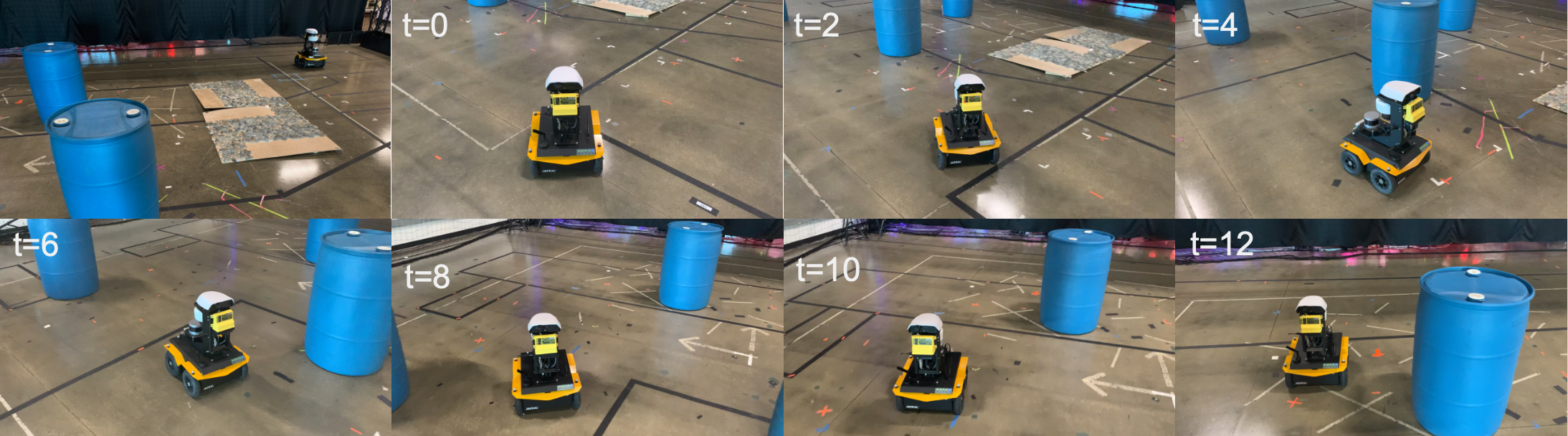}
    \caption{Upper left shows jackal test environment showing two barrels, rough terrain simulant, and Clearpath Jackal. Remaining figures show an example rollout.}
    \label{fig:jackal_test_env}
\end{figure}

We apply our method to a Clearpath Jackal. Our environment consisted primarily of cement but also included an approximately 1.25x2.5 meter area with artificial rough terrain, see Figure \ref{fig:jackal_test_env}. The Jackal takes linear $v$ and angular $w$ velocities as input. We model the dynamics as:
\begin{equation}
    % \begin{bmatrix}
    % x \\ 
    % y \\ 
    % \theta 
    % \\ 
    % v \\ 
    % w
    % \end{bmatrix} = 
    \begin{bmatrix}
    x & y & \theta & v & w
    \end{bmatrix}^\top =
    \begin{bmatrix}
    f(x,u) \\
    \bm{0}_{2 \times 1}
    \end{bmatrix} +
    \begin{bmatrix}
    \bm{0}_{3 \times 2} \\
    \mathbf{I}_{2 \times 2}
    \end{bmatrix}
    \begin{bmatrix}
    a \\
    \alpha
    \end{bmatrix}
\end{equation}

where $x$ and $y$ correspond to the position  and $\theta$ the heading of the robot. $a$ and $\alpha$ are virtual acceleration inputs. $f$ is parameterized as a NN with three hidden ReLU layers of sizes 128, 256, and 128. Our metric $M(x)$ is parameterized as a NN, using only $x$, $y$, and $\theta$ as input in order to satisfy the orthogonality condition of \eqref{eq:contraction_conditions}. The network outputs the lower left Cholesky decomposition $L$ of the dual metric $W$. The diagonal elements of $L$ are computed via three hidden ReLU layers of sizes 256, 512, and 256 and a final SoftPlus output layer. Off-diagonal elements are computed with another network with three hidden ReLU layers of sizes 256, 512, and 256. $w$ is composed of one hidden ReLU layer of size 128. We collected random rollouts on the data resulting in about 22000 training points and 2200 validation points over the hyperrectangle $[-4.2,3.2]\times[-3.1,3.1]\times[-\pi,\pi]\times[0.3, 1.2]\times[-1.25,1.25]\times[-0.6, 0.6]\times[-0.8, 0.8] = D \subset \X \times \U$. Our dynamics and CCM are trained with data without the rough terrain simulant. We collected an additional 11000 training points and 1100 validation points with the rough terrain simulant to train the bound $w(x,u)$. With confidence $\psi = 0.99$, we found $\phi = 0.000204$, $\lambda = 0.3277$, $L_{\Theta w} = 0.046$, $\delta_u = 0.106$, $\underline{\lambda}_D(M(x))=0.1508$, and $\bar{C}_1 = -0.0037$.

Using our full method, we computed 8 plans in the environment with 3 additional barrel obstacles and executed each plan 3 times, see Figure \ref{fig:jackal_rollouts}. No plans were found over the rough terrain region, indicating that the model learned to avoid larger noise as it contributed to large and unsafe tracking tubes.  We compare to the na\"ive planner and a second ``unsafe" planner, computing 5 plans for each one and executing each plan 3 times. The unsafe planner only uses the nominal trajectory for collision checking rather than the full tracking tube. Tracking results are given in Table \ref{table:stats_jackal} for the full and na\"ive planners. The tracking results for the unsafe planner are similar to the full method, but 40\% of rollouts collided with the barrels compared to no rollouts of the full method.  Comparing against \cite{ChouOB21}, the resulting fitted GEV distribution has shape $\xi=0.1582>0$ which corresponds to an infinite Lipschitz constant of the model error.

\begin{figure}
    \centering
    \begin{subfigure}[b]{0.47\linewidth}
        \includegraphics[width=\linewidth]{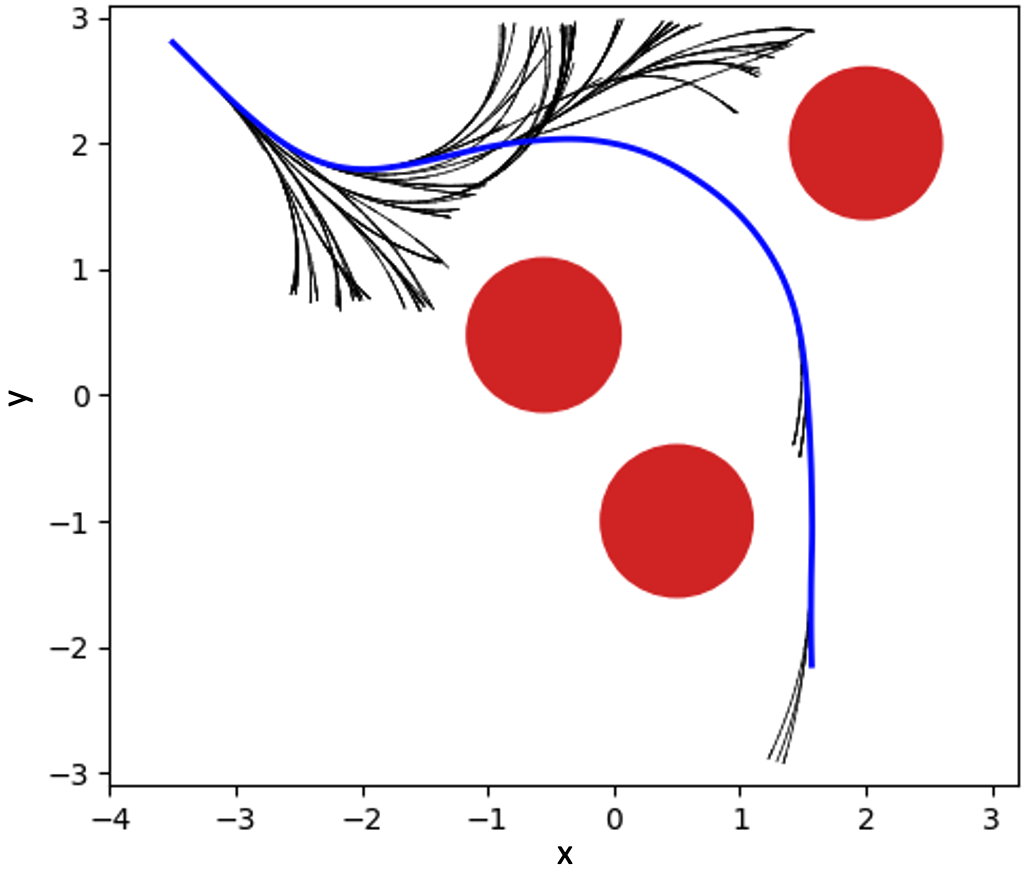}
    \end{subfigure}
    \begin{subfigure}[b]{0.51\linewidth}
        \includegraphics[width=\linewidth]{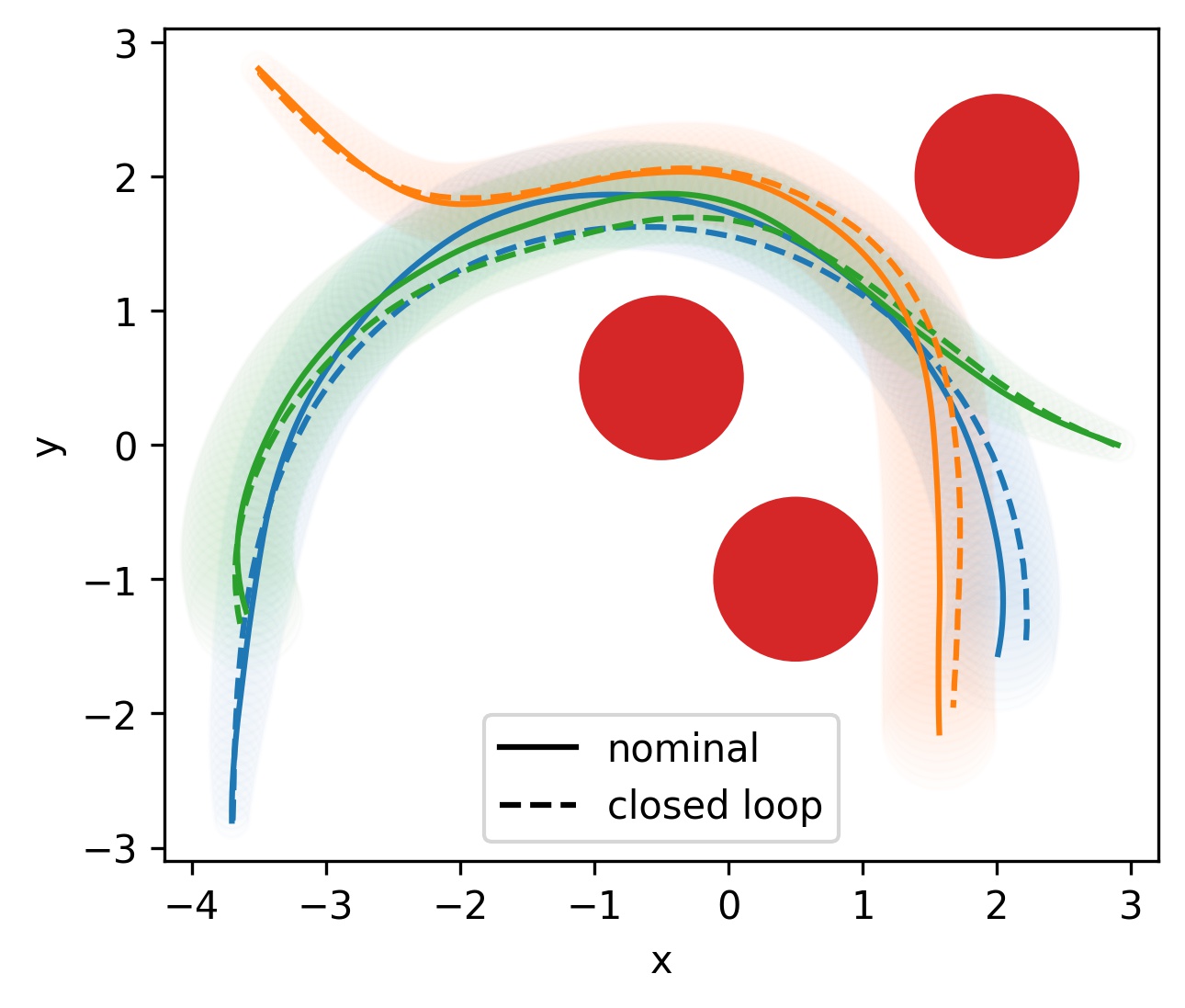}
        %\vspace{0.01pt}
    \end{subfigure}
    \caption{Left: Example tree and found plan. Notice the tree stops expanding in the area with rough terrain. Right: Three example plans and corresponding rollouts using our full method. Tube used for collision checks is shown with shaded region. Obstacles are in red. Not shown is rough terrain region approximately spanning $[-2.5,-1.25]$ in $x$ and $[-2.1, 0.3]$ in $y$.}
    \label{fig:jackal_rollouts}
    \vspace{-8pt}
\end{figure}

\begin{table}\centering
\begin{tabular}{ c | c | c }
  & SAFED & Na\"ive planner \\\hline
 \cellcolor{lightgray!50!} \hspace{-7pt}Avg. trk. err.\hspace{-5pt} & \cellcolor{lightgray!50!} 0.193 $\pm$ 0.0529 (0.256)& \cellcolor{lightgray!50!} 0.512 $\pm$ 0.450 (1.514) \\ 
 \hspace{-7pt}Goal error\hspace{-5pt} & 0.187 $\pm$ 0.0556 (0.262)& 1.24 $\pm$ 1.06 (3.60) \\ 
\end{tabular}
\caption{Jackal tracking errors $\Vert x^*(t) - x(t)\Vert_2$. Mean $\pm$ standard deviation (worst case).}
\label{table:stats_jackal}
\vspace{-16pt}
\end{table}

\vspace{-8pt}
\section{Conclusion}
\vspace{-3pt}

We present a method that is capable of planning safe, feasible paths for stochastic, underactuated, unknown dynamics. We verify the properties of a learned dynamics model, noise bound, and control contraction metric using Extreme Value Theory to bound tracking tubes around nominal plans. The planner ensures that the tubes do not collide, the tubes stay in the domain where our statistical guarantees hold, and feedback control does not exceed controller bounds. We experimentally verify on 10D simulated quadrotor, a CrazyFlie, and a Clearpath Jackal. While this method is capable of handling stochastic systems, it does so in a robust way by always considering worst case. Potentially the conservativeness can be reduced by learning the distribution of noise and utilizing probabilistic collision bounds in planning. 
%Otherwise, improving the learning process could considerably improve contraction rate.
This method could also be extended to a real-time receding horizon planner in order to plan in unknown environments.

\bibliographystyle{IEEEtran}
\bibliography{main}

% Generated by IEEEtran.bst, version: 1.14 (2015/08/26)
\begin{thebibliography}{10}
\providecommand{\url}[1]{#1}
\csname url@samestyle\endcsname
\providecommand{\newblock}{\relax}
\providecommand{\bibinfo}[2]{#2}
\providecommand{\BIBentrySTDinterwordspacing}{\spaceskip=0pt\relax}
\providecommand{\BIBentryALTinterwordstretchfactor}{4}
\providecommand{\BIBentryALTinterwordspacing}{\spaceskip=\fontdimen2\font plus
\BIBentryALTinterwordstretchfactor\fontdimen3\font minus
  \fontdimen4\font\relax}
\providecommand{\BIBforeignlanguage}[2]{{%
\expandafter\ifx\csname l@#1\endcsname\relax
\typeout{** WARNING: IEEEtran.bst: No hyphenation pattern has been}%
\typeout{** loaded for the language `#1'. Using the pattern for}%
\typeout{** the default language instead.}%
\else
\language=\csname l@#1\endcsname
\fi
#2}}
\providecommand{\BIBdecl}{\relax}
\BIBdecl

\bibitem{de2007extreme}
L.~De~Haan and A.~Ferreira, \emph{Extreme value theory: an introduction}.\hskip
  1em plus 0.5em minus 0.4em\relax Springer Science \& Business Media, 2007.

\bibitem{lipschitz_ral}
\BIBentryALTinterwordspacing
C.~Knuth, G.~Chou, N.~Ozay, and D.~Berenson, ``Planning with learned dynamics:
  Probabilistic guarantees on safety and reachability via lipschitz
  constants,'' \emph{IEEE RA-L}, 2021. [Online]. Available:
  \url{https://arxiv.org/abs/2010.08993}
\BIBentrySTDinterwordspacing

\bibitem{ChouOB21}
G.~Chou, N.~Ozay, and D.~Berenson, ``Model error propagation via learned
  contraction metrics for safe feedback motion planning of unknown systems,''
  in \emph{2021 60th {IEEE} Conference on Decision and Control (CDC), Austin,
  TX, USA, December 14-17, 2021}.\hskip 1em plus 0.5em minus 0.4em\relax
  {IEEE}, 2021, pp. 3576--3583.

\bibitem{manchester}
I.~R. Manchester and J.~E. Slotine, ``Control contraction metrics: Convex and
  intrinsic criteria for nonlinear feedback design,'' \emph{{IEEE} Trans.
  Autom. Control.}, vol.~62, no.~6, pp. 3046--3053, 2017.

\bibitem{hj}
S.~Bansal, M.~Chen, S.~L. Herbert, and C.~J. Tomlin, ``Hamilton-jacobi
  reachability: {A} brief overview and recent advances,'' in \emph{56th {IEEE}
  Annual Conference on Decision and Control, {CDC} 2017, Melbourne, Australia,
  December 12-15, 2017}.\hskip 1em plus 0.5em minus 0.4em\relax {IEEE}, 2017,
  pp. 2242--2253.

\bibitem{MajumdarT17}
A.~Majumdar and R.~Tedrake, ``Funnel libraries for real-time robust feedback
  motion planning,'' \emph{IJRR}, vol.~36, no.~8, pp. 947--982, 2017.

\bibitem{sumeet_icra}
S.~Singh, B.~Landry, A.~Majumdar, J.~E. Slotine, and M.~Pavone, ``Robust
  feedback motion planning via contraction theory,'' 2019.

\bibitem{DBLP:journals/corr/abs-2004-01142}
A.~Lakshmanan, A.~Gahlawat, and N.~Hovakimyan, ``Safe feedback motion planning:
  {A} contraction theory and l\({}_{\mbox{1}}\)-adaptive control based
  approach,'' \emph{CDC}, 2020.

\bibitem{DBLP:journals/corr/abs-2003-10028}
B.~T. Lopez, J.~E. Slotine, and J.~P. How, ``Robust adaptive control barrier
  functions: An adaptive \& data-driven approach to safety,'' \emph{{IEEE}
  Control. Syst. Lett.}, vol.~5, no.~3, pp. 1031--1036, 2021.

\bibitem{DBLP:journals/tac/PhamTS09}
Q.~Pham, N.~Tabareau, and J.~E. Slotine, ``A contraction theory approach to
  stochastic incremental stability,'' \emph{{IEEE} Trans. Autom. Control.},
  vol.~54, no.~4, pp. 816--820, 2009.

\bibitem{DBLP:journals/csysl/TsukamotoCS21}
H.~Tsukamoto, S.~Chung, and J.~E. Slotine, ``Neural stochastic contraction
  metrics for learning-based control and estimation,'' \emph{{IEEE} Control.
  Syst. Lett.}, vol.~5, no.~5, pp. 1825--1830, 2021.

\bibitem{dawei}
D.~Sun, S.~Jha, and C.~Fan, ``Learning certified control using contraction
  metric,'' \emph{CoRL}, 2020.

\bibitem{DBLP:journals/csysl/TsukamotoC21}
H.~Tsukamoto and S.~Chung, ``Neural contraction metrics for robust estimation
  and control: {A} convex optimization approach,'' \emph{{IEEE} Control. Syst.
  Lett.}, vol.~5, no.~1, pp. 211--216, 2021.

\bibitem{sumeet_wafr}
S.~Singh, S.~M. Richards, V.~Sindhwani, J.~E. Slotine, and M.~Pavone,
  ``Learning stabilizable nonlinear dynamics with contraction-based
  regularization,'' \emph{IJRR}, 2020.

\bibitem{DBLP:conf/icra/HanJW22}
W.~Han, A.~Jasour, and B.~C. Williams, ``Non-gaussian risk bounded trajectory
  optimization for stochastic nonlinear systems in uncertain environments,'' in
  \emph{2022 International Conference on Robotics and Automation, {ICRA} 2022,
  Philadelphia, PA, USA, May 23-27, 2022}.\hskip 1em plus 0.5em minus
  0.4em\relax {IEEE}, 2022, pp. 11\,044--11\,050.

\bibitem{DBLP:journals/ral/ChouWB22}
G.~Chou, H.~Wang, and D.~Berenson, ``Gaussian process constraint learning for
  scalable chance-constrained motion planning from demonstrations,''
  \emph{{IEEE} Robotics Autom. Lett.}, vol.~7, no.~2, pp. 3827--3834, 2022.

\bibitem{prob_decomp}
M.~Ono, L.~Blackmore, and B.~C. Williams, ``Chance constrained finite horizon
  optimal control with nonconvex constraints,'' in \emph{ACC}, 2010.

\bibitem{DBLP:conf/corl/ChouBO20}
G.~Chou, D.~Berenson, and N.~Ozay, ``Uncertainty-aware constraint learning for
  adaptive safe motion planning from demonstrations,'' in \emph{4th Conference
  on Robot Learning, CoRL 2020, 16-18 November 2020, Virtual Event / Cambridge,
  MA, {USA}}, ser. Proceedings of Machine Learning Research, vol. 155.\hskip
  1em plus 0.5em minus 0.4em\relax {PMLR}, 2020, pp. 1612--1639.

\bibitem{vitus}
M.~P. {Vitus}, Z.~{Zhou}, and C.~J. {Tomlin}, ``Stochastic control with
  uncertain parameters via chance constrained control,'' \emph{IEEE
  Transactions on Automatic Control}, vol.~61, no.~10, pp. 2892--2905, 2016.

\bibitem{ccrrt}
B.~Luders, M.~Kothari, and J.~How, ``Chance constrained rrt for probabilistic
  robustness to environmental uncertainty,'' in \emph{GNC}, 2010.

\bibitem{estimate_prob_collision_sensing_uncertainty}
S.~Patil, J.~van~den Berg, and R.~Alterovitz, ``Estimating probability of
  collision for safe motion planning under gaussian motion and sensing
  uncertainty,'' in \emph{ICRA}, 2012.

\bibitem{DBLP:conf/corl/MengSQWF21}
Y.~Meng, D.~Sun, Z.~Qiu, M.~T.~B. Waez, and C.~Fan, ``Learning density
  distribution of reachable states for autonomous systems,'' in
  \emph{Conference on Robot Learning, 8-11 November 2021, London, {UK}}, ser.
  Proceedings of Machine Learning Research, vol. 164.\hskip 1em plus 0.5em
  minus 0.4em\relax {PMLR}, 2021, pp. 124--136.

\bibitem{DBLP:journals/ijrr/MajumdarVTT14}
A.~Majumdar, R.~Vasudevan, M.~M. Tobenkin, and R.~Tedrake, ``Convex
  optimization of nonlinear feedback controllers via occupation measures,''
  \emph{Int. J. Robotics Res.}, vol.~33, no.~9, pp. 1209--1230, 2014.

\bibitem{DBLP:conf/nips/KolterM19}
G.~Manek and J.~Z. Kolter, ``Learning stable deep dynamics models,'' in
  \emph{NeurIPS}, 2019, pp. 11\,126--11\,134.

\bibitem{DBLP:journals/corr/abs-2008-05952}
N.~M. Boffi, S.~Tu, N.~Matni, J.~E. Slotine, and V.~Sindhwani, ``Learning
  stability certificates from data,'' \emph{CoRL}, 2020.

\bibitem{DBLP:journals/corr/abs-2005-00611}
Y.~Chang, N.~Roohi, and S.~Gao, ``Neural lyapunov control,'' \emph{NeurIPS},
  2019.

\bibitem{koller2018learning}
T.~Koller, F.~Berkenkamp, M.~Turchetta, and A.~Krause, ``Learning-based model
  predictive control for safe exploration,'' in \emph{CDC}, 2018.

\bibitem{akametalu2014reachability}
A.~K. Akametalu, J.~F. Fisac, J.~H. Gillula, S.~Kaynama, M.~N. Zeilinger, and
  C.~J. Tomlin, ``Reachability-based safe learning with gaussian processes,''
  in \emph{CDC}, 2014, pp. 1424--1431.

\bibitem{berkenkamp2016safe}
F.~Berkenkamp, R.~Moriconi, A.~P. Schoellig, and A.~Krause, ``Safe learning of
  regions of attraction for uncertain, nonlinear systems with gaussian
  processes,'' in \emph{CDC}, 2016, pp. 4661--4666.

\bibitem{DBLP:journals/corr/abs-2002-01587}
D.~D. Fan, A.~Agha{-}mohammadi, and E.~A. Theodorou, ``Deep learning tubes for
  tube {MPC},'' \emph{RSS}, 2020.

\bibitem{degroot2013probability}
M.~DeGroot and M.~Schervish, \emph{Probability \& Statistics}.\hskip 1em plus
  0.5em minus 0.4em\relax \hspace{-4pt}Pearson, 2013.

\bibitem{DBLP:journals/corr/abs-1912-13138}
B.~T. Lopez and J.~E. Slotine, ``Contraction metrics in adaptive nonlinear
  control,'' \emph{CoRR}, vol. abs/1912.13138, 2019.

\bibitem{chou2022safe}
G.~Chou, N.~Ozay, and D.~Berenson, ``Safe output feedback motion planning from
  images via learned perception modules and contraction theory,'' \emph{arXiv
  preprint arXiv:2206.06553}, 2022.

\bibitem{JordanD20}
M.~Jordan and A.~G. Dimakis, ``Exactly computing the local lipschitz constant
  of relu networks,'' in \emph{NeurIPS}, 2020.

\bibitem{NIPS2019_9319}
M.~Fazlyab, A.~Robey, H.~Hassani, M.~Morari, and G.~Pappas, ``Efficient and
  accurate estimation of lipschitz constants for deep neural networks,'' in
  \emph{NeurIPS}.\hskip 1em plus 0.5em minus 0.4em\relax Curran Associates,
  Inc., 2019, pp. 11\,427--11\,438.

\bibitem{sattar2020non}
Y.~Sattar and S.~Oymak, ``Non-asymptotic and accurate learning of nonlinear
  dynamical systems,'' 2020.

\end{thebibliography}

\end{document}